\newif\ifarxiv
\arxivtrue
\ifarxiv
\documentclass[letter,11pt]{article}
\else
\documentclass[final,12pt]{alt2021}
\fi


\ifarxiv
\title{Efficient Learning with Arbitrary Covariate Shift}
\else
\title[Efficient Learning with Arbitrary Covariate Shift]{Efficient Learning with Arbitrary Covariate Shift}
\fi
\usepackage{times}



\ifarxiv
\author{Adam Kalai \\ \texttt{adum@microsoft.com} \\ Microsoft Research New England \and Varun Kanade \\ \texttt{varunk@cs.ox.ac.uk} \\ University of Oxford}
\else
\altauthor{%
 \Name{Adam Kalai} \Email{adum@microsoft.com}\\
 \addr Microsoft Research New England
 \AND
 \Name{Varun Kanade} \Email{varunk@cs.ox.ac.uk}\\
 \addr University of Oxford%
}
\fi

\usepackage[utf8]{inputenc}
\usepackage{mathtools}
\usepackage{enumitem}
\usepackage{tikz}
\usetikzlibrary{patterns}

\ifarxiv
\usepackage{subcaption}
\usepackage[ruled,vlined]{algorithm2e}
\usepackage{amssymb,amsthm,amsmath}
\usepackage{graphicx}
\usepackage{url}
\usepackage{fullpage}
\usepackage{xspace}
\usepackage{hyperref}
\hypersetup{
    colorlinks,
    linkcolor={red!50!black},
    citecolor={blue!50!black},
    urlcolor={blue!80!black}
}
\usepackage[round]{natbib}
\newtheorem{definition}{Definition}
\newtheorem{lemma}{Lemma}
\newtheorem{theorem}{Theorem}

\fi

\newcommand{\poly}{\mathrm{poly}}
\newcommand{\opt}{\mathrm{opt}}
\newcommand{\indicator}{\mathbf{1}}
\newcommand{\EX}{\mathsf{EX}}
\newcommand{\DNF}{\mathsf{DNF}}
\newcommand{\err}{\mathrm{err}}
\newcommand{\false}{\mathrm{false}}
\newcommand{\rej}{\mathrm{rej}}

\newcommand{\sgn}{\mathrm{sgn}}
\newcommand{\defeq}{\coloneqq}
\newcommand{\eps}{\epsilon}
\newcommand{\question}{\bot}
\newcommand{\zo}{\{0,1\}}

\newcommand{\VC}{\mathrm{VC}}
\newcommand{\PAR}{\mathsf{PARITIES}}
\newcommand{\GF}{\mathbf{GF}}
\newcommand{\vspan}{\mathrm{span}}
\newcommand{\algfullname}{{Slice-and-Dice}\xspace}
\newcommand{\alg}{S\&D\xspace}
\newcommand{\algneg}{DICE$_0$\xspace}
\newcommand{\algpos}{DICE$_1$\xspace}

\makeatletter
\renewcommand{\P}{%
	\@ifnextchar\bgroup%
	{\@Pwithargs}
	{\@Pnoargs}
}
\newcommand{\@Pwithargs}[1]{%
	\@ifnextchar\bgroup%
	{\@Ptwoargs{#1}}
	{\@Ponearg{#1}}
}
\newcommand{\@Pnoargs}{\mathbb{P}}
\newcommand{\@Ponearg}[1]{\mathbb{P}\left[ #1 \right]}
\newcommand{\@Ptwoargs}[2]{\mathbb{P}_{#1}\left[ #2 \right]}
\newcommand{\E}{%
	\@ifnextchar\bgroup%
	{\@Ewithargs}
	{\@Enoargs}
}
\newcommand{\@Ewithargs}[1]{%
	\@ifnextchar\bgroup%
	{\@Etwoargs{#1}}
	{\@Eonearg{#1}}
}
\newcommand{\@Enoargs}{\mathbb{E}}
\newcommand{\@Eonearg}[1]{\mathbb{E}\left[ #1 \right]}
\newcommand{\@Etwoargs}[2]{\underset{#1}{\mathbb{E}}\left[ #2 \right]}
\makeatother
\newcommand{\mE}{\mathcal{E}}

\begin{document}

\maketitle

We give an efficient algorithm for learning a binary function in a given class $C$ of bounded VC dimension, with training data distributed according to $P$ and test data according to $Q$, where $P$ and $Q$ may be arbitrary distributions over $X$. This is the generic form of what is called \textit{covariate shift}, which is impossible in general as arbitrary $P$ and $Q$ may not even overlap. However, recently guarantees were given in a model called PQ-learning \citep{GKKM:2020} where the learner has: (a) access to unlabeled test examples from $Q$ (in addition to labeled samples from $P$, i.e., semi-supervised learning); and (b) the option to \textit{reject} any example and abstain from classifying it (i.e., selective classification). The algorithm of \cite{GKKM:2020} requires an (agnostic) noise-tolerant learner for $C$.
The present work gives a polynomial-time PQ-learning algorithm, called \textit{\algfullname}, that uses an oracle to a ``reliable'' learner for $C$, where reliable learning \citep{KKM:2012} is a model of learning with one-sided noise. Furthermore, this reduction is optimal in the sense that we show the equivalence of reliable and PQ learning.

\section{Introduction}
Consider learning a binary function $f: X \rightarrow \zo$ in a given class $C$
of bounded VC dimension, with training data distributed according to $P$ and
test data according to $Q$, where $P$ and $Q$ may be arbitrary distributions
over $X$. This form of what is often called learning with \textit{Covariate
Shift} (CvS) is extreme because $P$ and $Q$ may be arbitrary, whereas much work
assumes bounded $Q(x)/P(x)$. In standard supervised learning, learning with
arbitrary CvS is known to be impossible \citep[e.g.,][]{ImpossibleCvS}. In recent work,
\cite{GKKM:2020} show that it is possible using: 
\begin{enumerate}[label=(\alph*)]
	\item Access to unlabeled test examples from $Q$ (in addition to labeled
		samples from $P$). This is often called semi-supervised learning. 
	\item The option to \textit{reject} any example and abstain from classifying
		it. This is often called \textit{selective classification} or
		classification with a reject option. Equivalently, one can think of a
		classifier that outputs $0$, $1$, or $\bot$, where $\bot$ indicates
		rejection.
	\item An efficient Empirical Risk Minimization oracle (ERM) which can find a
		classifier $c \in C$ of minimal error with respect to any training set.
		For $C$ of bounded VC dimension, this is equivalent to ``proper agnostic
		learning'' as defined by \cite{Kearns92towardefficient}, where agnostic
		learning is a model of learning with arbitrary label noise.
\end{enumerate}
\cite{GKKM:2020} call their model PQ-learning, and it is motivated by the
numerous applications where the test distribution is different than the
training distribution, for adversarial or natural reasons. Both (a) and (b) are
provably necessary for the error and rejection requirements of PQ-learning,
defined below. However, it is not clear what the \textit{computational}
requirements are---is PQ-learning as hard as agnostic learning or is it as easy
as PAC learning?

We show PQ-learning is equivalent to learning with one-sided arbitrary noise,
defined as \textit{reliable learning} by \cite{KKM:2012}. Crucial to
this result, and perhaps the most interesting part of the paper, is the \textit{\algfullname} (\alg) selective classification algorithm.  \alg uses a  reliable learner (rather than a full agnostic learner) to efficiently PQ learn. Conversely, we show that one cannot further reduce PQ-learning to a weaker oracle in that we also present a reduction from reliable learning to PQ-learning. Ignoring
computation, the number of examples required for learning in all these models
was known to be polynomially related to $d=\VC(C)$, the VC dimension of $C$. 

Further, we give evidence that the difficulty of PQ-learning, and thus also
reliable learning, lies somewhere in between that of PAC and agnostic learning,
assuming the hardness of learning parity with noise and DNFs. In particular,
we observe that parity functions are PQ-learnable. This suggests that
PQ-learning is easier than agnostic learning since there is no known
noise-tolerant parity learning algorithm, and in fact multiple cryptography
systems rely on its hardness \citep[see e.g.,][]{pietrzak2012cryptography}. We
also observe that conjunctions, which are easily PAC learnable, are unlikely to be
PQ-learnable, or at least that PQ-learning conjunctions would imply
PAC-learning DNFs, a longstanding open PAC learning problem. Hence,
there is an interesting computational complexity hierarchy in that PAC learning
is easier than reliable-learning and PQ-learning, which in turn are easier than
agnostic learning, assuming that parity is hard to agnostically learn and
DNFs are hard to PAC-learn. 

Previous work has shown such separations in related models. For example,
\citet{Bshouty2005MaximizingAW} show that learning conjunctions in a
model very similar to reliable learning implies PAC learning DNFs.
\citet{kanade2014distribution} give an algorithm for reliably learning
majorities over $\zo^d$ in time $2^{\tilde{O}(\sqrt{d})}$, whereas for this
problem there are no agnostic learning algorithms known that run in time less
than $2^{\Omega(d)}$. Although it doesn't neatly fit in the boolean function
setting, \citet{goel2017reliably} showed that the class of ``ReLUs'' over
$\zo^d$ is at least as hard to learn as learning $\omega(1)$-size parities with
noise. 

We now describe the learning models and our algorithms. \alg is intuitive and would be easy to implement in practice using off-the-shelf classifiers.

\subsection{PQ-learning}

Recall that the goal is to learn an unknown $f \in C$, where $C$ is a given
family of binary functions, with respect to arbitrary distributions $P, Q$ over
$X$. The learner is given examples from $P$ labeled by an arbitrary $f \in C$,
and unlabeled examples from $Q$. It outputs a \textit{selective classifier}
$h:X \rightarrow \{0,1,\bot\}$, and we say that $x$ is \textit{rejected} if
$h(x)=\bot$. An \textit{error} is a misclassified example that is not rejected,
i.e., $h(x)=1-f(x)$. Of course, one can guarantee 0 errors by simply rejecting
everything, or 0 rejections by classifying everything as 0, but the challenge
is to simultaneously achieve low error and rejection rates.  

Now, one can consider the rejection and error rates with respect to $P$ or $Q$.
PQ-learning requires, with high probability, at most $\eps$ error rate with
respect to $Q$, and at most $\eps$ rejection rate \textit{with respect to} $P$. 
At first this may seem counter-intuitive, as one may care only about $Q$. 
Ideally, one would have liked a low rejection rate with respect to
$Q$, but this is impossible in general since $P$ and $Q$ may be very different.
However, if one is concerned with rejection rate with respect to $Q$, an $\eps$ $P$-rejection rate implies a $Q$-rejection rate of at most 
$\eps$ plus the statistical distance between $P$ and $Q$. Thus, if $P=Q$ the rejection rate from $Q$ is at most $\eps$ and the bound degrades naturally with the degree of overlap between $P$ and $Q$. Also, as is standard, PQ learning requires the above for every $\eps>0$ with a runtime (and thus also
the number of labeled and unlabeled examples it uses) that is polynomial in
$1/\eps$. 

\subsection{Reliable learning}

Reliable learning, as defined by \cite{KKM:2012}, is a model of learning with
one-sided agnostic noise. Reliable learning is motivated by applications where
false positives (or false negatives) are to be avoided at all cost. The
reliable model applies to a standard agnostic setting (supervised learning with
$P=Q$, no unlabeled test examples, no reject option, and arbitrary $f$, even $f
\not\in C$). For the moment, suppose that $C$ is closed under complements,
meaning that for any classifier $c \in C$, $1-c$ is also in $C$. Then reliably
learning $C$ means finding a classifier $h:X \rightarrow \zo$ with $\eps$ false
positive rate and which has an error rate at most $\opt+\eps$ where $\opt$ is
the error rate of the best classifier with 0-false positive rate; assuming the constant $0$ and $1$ functions are in $C$ such a classifier always exists. (An analogous
notion for false negatives is equivalent if $C$ is closed under complements.)
Reliable learning has been applied in various works as described in the related
work section below.

In practice, it is straightforward to implement a reliable learner using any standard
classifier by a variety of means: one can heavily up-weight the negative
examples, subsample the positive examples, use a cost-sensitive classification
algorithm with high weight on false positive errors, or simply apply a positive
label only on the examples that a classifier is most confident on for classifiers
that also provide a confidence signal such as a margin. Thus there are a
variety of ways to implement Algorithm \ref{alg:PQtoReliable} in practice.
Formally, \cite{KKM:2012} prove that reliable learning is no harder than
agnostic learning. 

\subsection{The \algfullname (\alg) Algorithm}
The key question in selective classification and PQ learning is what to reject. Like previous selective classification algorithms, \alg first trains a classifier $c$ on the labeled training data from $P$ using, say, a PAC-learner. Like many such algorithms, \alg outputs a selective classifier $h$ such that $h(x)\in \{c(x),\bot\}$, i.e., its classifications agree with $c$ except that it rejects some examples. 

To determine what to reject, \alg ``slices'' the space $X$ into two parts: where $c(x)=0$ versus $c(x)=1$. It then rejects examples from each of these parts separately by repeated ``dicing'': distinguishing examples that clearly come from $Q$ versus those that may come from $P$, on the respective part. To dice on $c(x)=0$, it creates an artificial datasets of examples where $c(x)=0$ consisting of both $P$-examples, labeled $0$ and $Q$-examples, labeled 1. It then trains an sequence of positive reliable learners to distinguish $P$ from $Q$, and $h$ rejects examples that are classified as 1 by any such distinguisher. Now, not all examples from $Q$ can be clearly distinguished from $P$ (e.g., where say $Q(x)\leq 2 P(x)$) but it turns out that not all examples need to be distinguished. It suffices to reject examples from $Q$ that are clearly distinguishable from $P$ by classifiers from $C$, but one must take care not to reject examples that are in fact from $P$ (few false positives, hence positive reliable learning). A similar approach is used on the $c(x)=1$ part. 

The idea of rejecting examples by training a distinguisher to distinguish examples from $Q$ versus $P$ is intuitive. Unfortunately, this approach does not directly work without slicing the space into parts where $c(x)=0$ and $c(x)=1$. This is illustrated by a trivial 4-point halfspace example in Fig.~\ref{fig:intro}(a),
where $C$ is the class of homogeneous halfspaces $\sgn(w_1 x_1 + w_2 x_2)$ that pass through the origin in two dimensions; $P$ is uniform over two symmetric unit vectors $\{u, -u\}$; and $Q$ is uniform over $\{u, -u, v, -v\}$, the same two points plus two additional symmetric unit vectors. Clearly $\pm v$ must be rejected because their labels are not determined by those of $u$, while we must not reject $\pm u$, but no homogeneous halfspace can distinguish $\pm u$ from $\pm v$. However, once one slices the space into positive and negative components, $v$ can be distinguished from $u$ within the respective parts. 
Fig.~\ref{fig:intro}(b) illustrates \alg for halfspaces more generally.

In contrast, the Rejectron algorithm of \cite{GKKM:2020} 
finds a sequence of candidate alternative classifiers that agree with the training data but disagrees with the test data and reject the disagreement regions. While this approach is also intuitive, it requires the full power of ERM (i.e., agnostic learning) rather than reliable learning and thus may require greater resources. 

\begin{figure}
	\begin{tabular}{cc}
	\begin{minipage}{0.39\textwidth}
		\centering
		\begin{tikzpicture}[scale=0.8]
	\def\cosfortyfive{0.707107}
	\def\costhirty{0.866025}
	\def\sinthirty{0.5}
	\draw (0, 0) circle (2cm);

	\draw (2*\cosfortyfive, 2*\cosfortyfive) node {\Large{\textcolor{red}{$\textbf{+}$}}};
	\draw (2*\cosfortyfive, 2*\cosfortyfive) node [above right] {$u$};
	\draw (-2*\cosfortyfive, -2*\cosfortyfive) node {\Large{\textcolor{blue}{$\textbf{--}$}}};
	\draw (-2*\cosfortyfive, -2*\cosfortyfive) node [below left] {$-u$};
	\draw[fill=black] (2*\costhirty, -2*\sinthirty) circle (0.07cm);
	\draw[fill=black] (2*\costhirty, -2*\sinthirty) node [below right] {$v$};
	\draw[fill=black] (-2*\costhirty, 2*\sinthirty) circle (0.07cm);
	\draw[fill=black] (-2*\costhirty, 2*\sinthirty) node [above left] {$-v$};
	\draw[fill=black] (0, 0) circle (0.07cm);
	\draw[-, thick] (-2.5*\cosfortyfive, 2.5*\cosfortyfive) -- (2.5*\cosfortyfive, -2.5*\cosfortyfive) node [right] {$c$};
\end{tikzpicture}
	\end{minipage} & 
	\begin{minipage}{0.59\textwidth}
		\centering
		\definecolor{darkgreen}{rgb}{0.0, 0.5, 0.13}
		\begin{tikzpicture}[scale=0.8]
	\def\hlx{-1}
	\def\hly{3}
	\def\hrx{1}
	\def\hry{-3}
	\def\calx{-0.67}
	\def\caly{2}
	\def\carx{0.6}
	\def\cary{3.2}
	\def\cblx{-0.7}
	\def\cbly{2.1}
	\def\cbrx{2}
	\def\cbry{-3}
	\def\cclx{1.2}
	\def\ccly{-2.8}
	\def\ccrx{2.4}
	\def\ccry{3}

	\foreach \x/\y in {0.1/1.7, 1/1, 2/2, 1.5/3, 1.2/-1, 1.5/-0.5, 1/-0.5}
		\draw (\x,\y) node {\Large{\textcolor{red}{$\textbf{--}$}}};
	\foreach \x/\y in {0.6/1.9, 1.7/1.1}
		\draw[fill=black] (\x, \y) circle (0.04cm);

	\foreach \x/\y in {-0.5/-1.7, -1.2/0.4, -2/0.2, -1.5/0, -1.2/-1, -1.5/-0.5, -1/-0.5}
		\draw (\x,\y) node {\Large{\textcolor{blue}{$\textbf{+}$}}};
	\foreach \x/\y in {-1.3/-0.1, -1.7/-1.1}
		\draw[fill=black] (\x, \y) circle (0.04cm);

	\foreach \x/\y in {2.1/0.7, 2/0.3, 2.5/0.8, 2.5/1.3, 2.7/-1, 2.5/-0.5, 2.2/-0.6}
		\draw[fill=black] (\x, \y) circle (0.04cm);

	\foreach \x/\y in {-0.7/1.1, -0.2/1, -0.5/0.8, -0.5/1.3, 0.7/-1, 0.5/-0.5, 0.25/-0.6, -0.5/1.2, -0.6/1.3, -0.2/-0.2, -0.22/-1, -0.3/-0.8}
		\draw[fill=black] (\x, \y) circle (0.04cm);

	\foreach \x/\y in {-0.7/2.1, -1.2/2.3, -0.5/2.8, -0.5/2.3, -1/2.5, -0.5/2.4, -1.2/2.6, -0.5/3.2, -0.6/2.3, -0.2/3.2, -0.22/3, -0.3/2.8}
		\draw[fill=black] (\x, \y) circle (0.04cm);

	\draw[-,very thick] (\hlx, \hly) -- (\hrx, \hry) node [below] {$c$};
	\draw (\hrx + 1, \hry) node [right] {$c(x) = 0$};
	\draw[-, thick, darkgreen] (\calx, \caly) -- (\carx, \cary);
	\draw[-, thick, darkgreen] (\cblx, \cbly) -- (\cbrx, \cbry);
	\draw[-, thick, darkgreen] (\cclx, \ccly) -- (\ccrx, \ccry);

	\usetikzlibrary{patterns}
	\fill[pattern color=darkgreen!50, pattern=vertical lines] (\calx, \caly) -- (\carx, \cary) -- (\carx, \cary + 0.3 ) -- (\hlx-0.1, \hly + 0.3);
	\fill[pattern color=darkgreen!50, pattern=north west lines] (\cblx, \cbly) -- (\cbrx, \cbry) -- (\hrx, \hry);
	\fill[pattern color=darkgreen!50, pattern=horizontal lines] (\cclx, \ccly) -- (\ccrx, \ccry) -- (\ccrx + 0.5, \ccry) -- (\ccrx + 0.5, \ccly);
\end{tikzpicture} 
\end{minipage}\\
(a) & 

	 (b)
	\end{tabular}
	\caption{(a) When learning homogeneous halfspaces (passing through the origin), there may be no halfspace that separates training examples $\pm u$ from test examples $\pm v$ even though the test examples must be rejected because the labels of $\pm u$ give no information about the labels of $\pm v$. Rejecting all points is not possible as we must not reject the points $\pm u$. 
	(b) An illustration of \alg for general halfspaces that do not necessarily pass through the origin. When focusing on examples with $c(x)=0$, we iteratively find classifiers in $C$ that reliably separate the problematic $x \sim Q$ (which we will label as positive) from $x \sim P$ (which retain their labels and will be mostly negative) and reject the green-striped regions. By only focusing on the $c(x)=0$ part we avoid having to find classifiers that also correctly classify the true positive examples.}
    \label{fig:intro}
\end{figure}
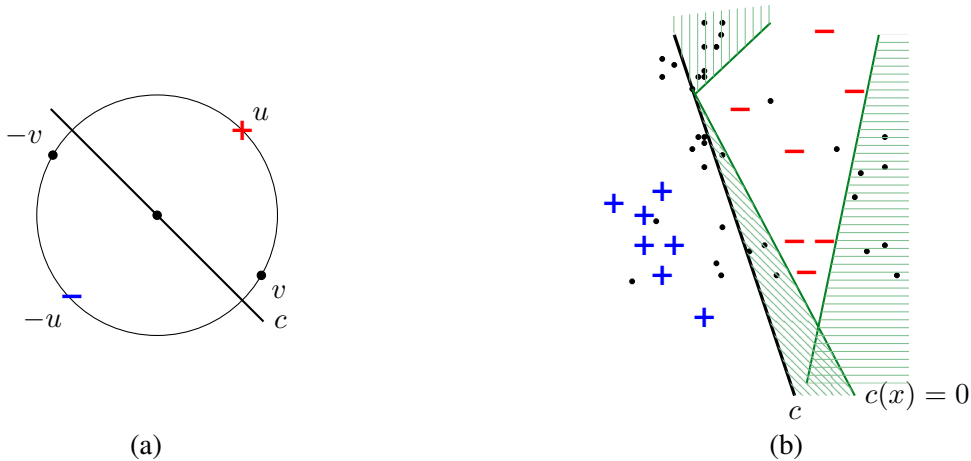

Our solution is simple. Like the ``Rejectron'' algorithm of \cite{GKKM:2020}, we first train a classifier $h$ on the labeled training data using, say, a PAC-learner. Then we separately focus on examples satisfying $h(x)=1$ and $h(x)=0$. Among those examples with $h(x)=1$, we show that there must exist $c \in C$ that distinguishes at least some of the problematic $x \sim Q$ from $x \sim P$. And finding such a classifier amounts to a reliable classification problem, i.e., learning with one-sided noise. 

\subsection{Related work}
This work is intimately related to that of \cite{GKKM:2020} and \cite{KKM:2012}. \cite{GKKM:2020} introduces both PQ-learning and an adversarial transductive model of learning and exhibits a trade-off between rejection rates and accuracy that we do not study here. Reliable learning has also been studied within the learning theory community for a variety of applications \citep[see e.g.,][]{kanade2014distribution, goel2017reliably, durgin2019hardness}. Several other models of learning are similar to reliable learning including models of \cite{pitt88, juba2016learning}. In a related model, \cite{Bshouty2005MaximizingAW} show that learning conjunctions would imply learning DNF.  

In supervised learning with CvS, a body of related work within the learning
theory community studies learning with CvS assuming that $Q(x) \leq M \cdot
P(x)$ for all $x \in X$ and some constant
$M>1$~\citep[e.g.,][]{huang2007correcting,ben2012hardness}. And without such an
assumption supervised learning has been shown to be impossible
\citep[e.g.,][]{ImpossibleCvS}. 

A separate body of work studies selective classification~\citep{rivest1988learning,li2011knows, sayedi2010trading}. Some of that work, particularly the work in online algorithms with a reject option, were targeted at non-stationary sequences of examples $x$. Even intervals are impossible to learn in online models, and in a supervised iid model \cite{kivinen1990reliable} showed that exponentially many examples are required to learning rectangles under uniform distributions (as cited by \cite{hopkins2019power, GKKM:2020}). Part of the challenge is that most definitions also require few test rejections, unlike PQ-learning's requirement of few rejections with respect to $P$. Other work, including the work of \cite{KKM:2012}, makes the standard assumption that $P=Q$ and instead uses rejections to handle uncertain regions due to agnostic noise. 

Finally, semi-supervised learning has been studied extensively in learning theory, again generally where $P=Q$. However, in the worst case the addition of unlabeled examples does not seem to provide significantly better guarantees than supervised learning \citep[e.g.,][]{ben2008does}.

\section{Preliminaries and Background}
\label{sec:prelims}
Let $X$ be the input space representing unlabeled examples and $Y = \{0, 1\}$ the target labels. Let $Y^X$ denote the set of functions from $X \rightarrow Y$. For $f, h \in Y^X$, and a probability measure $P$ over $X$, we denote the error of a hypothesis $h$ with respect to a ground truth classifier $f$ by 
\begin{align*}
	\err_P(h; f) = \P{x \sim P}{h(x) \neq f(x)}.
\end{align*}
A \emph{selective classifier} is a function $h : X \rightarrow \{0, 1, \question \}$. For such a classifier $\err_P(h; f)$ is defined as,
\begin{align*}
	\err_P(h; f) = \P{x \sim P}{h(x) \neq f(x) \wedge h(x) \neq \question}.
\end{align*} 
We can also define the \emph{rejection} rate of $h$ with respect to a distribution $P$ as
\begin{align*}
	\rej_P(h) = \P{x \sim P}{h(x) = \question }.
\end{align*}
As observed by \cite{GKKM:2020}, the rejection rate with respect to $P$ can be used to bound the rejection rate with respect to $Q$ in multiple ways, including the following:
\begin{align*}
	\rej_Q(h) = \P{x \sim Q}{h(x)=\question} \leq \rej_P(h) + \|P-Q\|_{TV},
\end{align*}
where $\|P-Q\|_{TV}$ is the total variation distance (also called statistical distance) between $P$ and $Q$.  

We also denote the false positive rate of $h$ with respect to a distribution $D$ over $X \times \{0, 1\}$ by
\begin{align*}
	\false^+_D(h) = \P{(x, y) \sim D}{h(x) = 1 \wedge y = 0}.
\end{align*}
Similarly the false negative rate of $h$ with respect to a distribution $D$ over $X \times \{0, 1\}$ is denoted by 
\begin{align*}
	\false^-_D(h) = \P{(x, y)\sim P}{h(x) = 0 \wedge y = 1}.
\end{align*}

A concept class $C \subseteq Y^X$ is a collection of functions from $X
\rightarrow Y$. For classifier $c$, we denote the complementary classifier by $\bar{c} = 1-c(x)$ and we denote $\bar{C}=\{\bar{c}~|~c \in C\}$. 
Following \citet{KKM:2012}, we assume that the constant 0 and 1 functions are both in $C$ in order for their notions of positive and negative reliable learning to be well-defined.

A distribution $P$ over $X$ and a function $f : X \rightarrow
[0, 1]$ define a joint probability distribution $D_{P, f}$ over $X \times \{0,
1\}$, such that the marginal distribution over $X$ is $P$ and $\E{(x, y) \sim
D_{P, f}}{y | x} = f(x)$. Using this notation, the error rate of a classifier (regular or selective) can be decomposed into false positives and false negatives:
\begin{align}\label{eq:decomposition}
	\err_P(h, f) = \false^+_{D_{P, f}}(h) + \false^-_{D_{P, f}}(h).
\end{align}

For a distribution $P$ over $X$ and a labeling function $f : X \rightarrow [0,
1]$, we denote by $\EX(P, f)$ an example oracle that when queried gives a
random \emph{labeled} example from $D_{P, f}$.  For distribution $D$ over $X
\times \{0,1\}$, we similarly define oracle $\EX(D)$.  And for a distribution
$P$ over $X$, we denote by $\EX(P)$ an example oracle that when queried gives a
random \emph{unlabeled} example drawn from the distribution $P$.  By a slight
abuse of notation, $\alpha \EX(P_1, f_1) + (1 - \alpha) \EX(P_2, f_2)$ denotes
an oracle that returns a labeled example $(x, y)$ where with probability
$\alpha$, $(x, y)$ is drawn from $D_{P_1, f_1}$, and with probability $1 -
\alpha$, $(x, y)$ is drawn from $D_{P_2, f_2}$. Finally, for an event $A$,
$P|_A$ denotes the probability distribution $P$ conditioned on $A$. 

\subsection{PQ Learning} 

We define the notion of PQ-Learning introduced in the recent work
by~\citet{GKKM:2020}. The definition below is mathematically
identical to theirs, but presented slightly differently using the oracles
$\EX(P, f)$ and $\EX(Q)$. 

\begin{definition}[PQ Learning~\citep{GKKM:2020}] A concept class $C$ over $X$ is PQ-learnable, if there exists a learning algorithm $L$, such that for every pair of distributions $(P, Q)$ over $X$, every target $f \in C$, for every $\epsilon > 0$ and for every $\delta > 0$, when given access to the labeled example oracle $\EX(P, f)$ and the unlabeled example oracle $\EX(Q)$, $L(\epsilon, \delta, \EX(P, f), \EX(Q))$ outputs a selective classifier $h : X \rightarrow \{0, 1, \question \}$, that with probability at least $1 - \delta$ simultaneously satisfies $\err_Q(h; f) \leq \epsilon$ and $\rej_P(h) \leq \epsilon$. Furthermore, $L$ must run in time polynomial in $1/\epsilon$ and $1/\delta$. 
\end{definition}

\subsection{Reliable Learning}

We now recall the definition of full reliable agnostic learning, introduced by~\citet{KKM:2012}, which we refer to as \textit{reliable learning} for brevity. Formally, this is defined in terms of their notions of positive and negative (agnostic) reliable learning, which capture robustness to one-sided noise. However, it is worth noting that the three definitions are all equivalent for concept classes $C=\bar{C}$ that are closed under complements.

\begin{definition}[Positive Reliable Learning] A concept class $C$
	over $X$ is positive reliably learnable, if there exists a
	learning algorithm $L$, such that for every distribution $D$ over
	$X \times \{0,1\}$, every $\epsilon > 0$ and
	$\delta > 0$, when given access to the labeled example oracle $\EX(D)$, $L\left(\epsilon, \delta, \EX(D)\right)$ outputs a hypothesis $h : X \rightarrow \{0,
	1\}$, that with probability at least $1 - \delta$ satisfies $\false^+_D(h) \leq \epsilon$ and $\false^-_D(h) \leq \opt_+ +
	\epsilon$, where,
	\begin{align*}
		\opt_+ = \min_{c \in C:~ \false^+_D(c) =
		0}\false^-_D(c). 
	\end{align*}
	Furthermore, we require the running time of $L$ to be polynomial in $1/\epsilon$ and $1/\delta$. 
\end{definition}
A class $C$ is said to be \textit{negative reliably learnable} if $\bar{C}$ is positive reliably learnable, which optimizes subject to a restriction on false negatives. As mentioned, it must be assumed that the constant 0 (1) function is in $C$ in order for positive (negative) reliable learning to be well-defined in the case where $f=0$ ($f=1$). Thus, as in \cite{KKM:2012}, we assume that both the constant 0 and 1 functions are in $C$. 
We now define our main notion, reliable learning; it is easy to see that it is equivalent to positive reliable learning if $C=\bar{C}$.
\begin{definition}[Reliable Learning] A concept class $C$
	over $X$ is reliably learnable if $C$ and $\bar{C}$ are both positive reliably learnable.
\end{definition}

We note that the above notion is equivalent to the notion of \emph{fully reliable agnostic learning} from \citet{KKM:2012}. To define that notion, they considered a learning algorithm that outputs a selective classifiers $h$ which has both false positive and negative rates
bounded by $\epsilon$. A pair of classifiers $(c_+, c_-)$, where both $c_+, c_-
\in C$, satisfying $\false^+(c_+) = 0$ and $\false^-(c_-) = 0$, can be
converted to a reliable selective classifier which has no false positive or
negative errors: output $c_+(x)$ if $c_+(x) = c_-(x)$ and output $\question$
otherwise. Furthermore, it is required that the probability $\P{h(x) =
\question} \leq \opt_\question + \epsilon$, where $\opt_\question$ is the
probability of predicting $\bot$ for the best pair of classifiers $(c_+, c_-)$
as defined previously.  To see that reliable learning is equivalent to fully reliable agnostic learning, note that given both positive and negative reliable learners, it is straightforward to construct an $h$ with $\P{h(x) =
\question} \leq \opt_\question + 2\epsilon$. And conversely, given a fully reliable learner, it is straightforward to construct positive and negative learners by simply converting its $\bot$ predictions to 0 or 1, respectively. 

\section{Equivalence Results}
In this section, we prove our main result showing the equivalence between
reliable learning and PQ learning up to polynomial factors in the
running time (and sample complexity). Theorem~\ref{thm:PQtoReliable} states that any concept class $C$ that is reliably learnable is also PQ-learnable. Theorem~\ref{thm:ReliabletoPQ} states the converse.

\ifarxiv
\begin{algorithm}
\else
\begin{algorithm2e}
\fi
	\ifarxiv\else\label{alg:PQtoReliable}\fi
	\caption{\ifarxiv\label{alg:PQtoReliable}\fi The \algneg algorithm which takes as input a classifier $c$ with error $\leq \eps/2$ and a positive reliable learning algorithm. It rejects part of the $c(x)=0$ region so as to ensure a small false negative rate with respect to $Q$ and small reject rate with respect to $P$.}
	\SetKwInput{KwIn}{Inputs}
	\SetKwInput{Return}{Return}
	\SetKw{And}{and}
	\SetKw{Break}{break}
	\DontPrintSemicolon
	\LinesNumbered
	\KwIn{\;
		~~a. Accuracy parameter $\epsilon$, Confidence parameter $\delta$ \;
		~~b. Access to oracles $\EX(P, f)$, $\EX(Q)$ \;
		~~c. Classifier $c: X \rightarrow \zo$ \;
		~~d. Positive reliable learner $L$ \;  }
		\textbf{let} $M = \frac{2}{\epsilon}\cdot \log\frac{1}{\epsilon}$\;
	\If{\{$\P{x \sim P}{c(x) = 0} < \epsilon$\}\label{algline:ifNegligible}}{
		    \textbf{define} $h(x) = 
        	\begin{cases} 
        		c(x) & \text{if } c(x) = 1\\
        		\question & \text{otherwise}
        	\end{cases}$ \;
        	\Return{$h$}
		}
	\textbf{define the event} $\mE_1 = \{x \in X\mid c(x) = 0\}$ \;
	\textbf{let} $i = 1$\;
	\While{$\{\P{x \sim Q}{x \in \mE_i} > \epsilon\}$ \label{algline:whilePos}}{
		$c_i = L\left(\frac{\epsilon}{2M}, \frac{\delta}{M}, \frac{1}{2}\EX(P|_{\indicator(c(x) = 0)}, f) + \frac{1}{2} \EX(Q|_{\mE_i}, 1)\right)$ \;\label{algline:NegRelLearn}
		\If{\{$\P{x \sim Q|_{\mE_i}}{c_i(x) = 1} < \frac{\epsilon}{2}$\}\label{algline:ifPos}}{
			\Break\; 
		}
		$\mE_{i+1} = \{ x \in \mE_i \mid  c_i(x) = 0\} = \{x \in X \mid c(x)=c_1(x)=\ldots=c_i(x)=0\}$ \;
		$i = i + 1$ \;
	} 
	\textbf{define} $h(x) = 
	\begin{cases} 
		c(x) & \text{if } c(x) = 1 \text{ or } \displaystyle\bigwedge\nolimits_i c_i(x) = 0\\
		\question & \text{otherwise}
	\end{cases}$ \;
	\Return{$h$}
\ifarxiv
\end{algorithm}
\else
\end{algorithm2e}
\fi

Before presenting the formal proof which relies on a slightly delicate argument, we give a high-level idea of the proof. In the slice step, \alg first finds a classifier $c$ that PAC-learns the target $f$ under the distribution $P$. This can be done since, without noise, a positive reliable learner (or a negative reliable learner) is also a PAC learner. The final classifier $h$ will never have opposite labels with $c$ but it may reject some examples, i.e., $h(x) \in \{c(x), \question\}$ for all $x \in X$. Now we separately consider the parts where $c = 0$ and $c = 1$. Since they are symmetric, we'll focus on the part when $c = 0$. Our \algneg algorithm will distinguish $Q$ from $P$, or at least where necessary. To do so, we will construct an artificial dataset and call a reliable learner on it. The dataset will consist entirely of examples where $c(x)=0$, and subject to that condition it will be an equal mixture of examples drawn from $P$ and from $Q$. The $P$ examples will be labeled by $f$, which means that most, though not necessarily all, will also have $f(x) = 0$. We will label the examples drawn from $Q$ as $1$. Then we will find a \emph{positive reliable classifier} for the resulting distribution. This allows us to identify a region that is almost exclusively $Q$, which we decide to reject. We repeat this process iteratively rejecting more and more of the space until we cannot find a nontrivial region to safely reject. The key observations are that: (a) the ground truth classifier $f$ is always reliable, and thus we will continue to reject a non-trivial fraction of $Q$ as long as there remain remains a non-negligible region under $Q$ where $f$ could output $1$; and (b) each iteration rejects a region of small probability under $P$.  In the end, we (mostly) only output $0$ on examples under $Q$ for which $f$ is actually $0$. The \algpos algorithm is completely analogous and runs on $c(x)=1$ using negative reliable learning. 

\begin{lemma}\label{lem:PQneg}
	For any $\eps, \delta>0$, distributions $P, Q$ over $X$, $f \in C$, and $c: X \rightarrow \zo$, with probability $\geq 1-\delta$, Algorithm \ref{alg:PQtoReliable} (\algneg) returns a classifier $h$ with $h(x) \in \{\question, c(x)\}$ for all $x \in X$, $\rej_P(x) \leq \eps + \err_P(c; f)$, and $\false^-_{D_{Q, f}}(h) \leq \eps$. The algorithm runs in expected time polynomial in $1/\eps, 1/\delta$ assuming access to an exact probability computation oracle required in Lines~\ref{algline:ifNegligible}, \ref{algline:whilePos} and \ref{algline:ifPos}.
\end{lemma}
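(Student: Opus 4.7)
The plan is to establish four things: (i) $h(x)\in\{\question, c(x)\}$ for all $x$, (ii) $\rej_P(h)\le \eps+\err_P(c;f)$, (iii) $\false^-_{D_{Q,f}}(h)\le \eps$, and (iv) termination within $M$ iterations, each in expected polynomial time. Claim (i) is immediate from the case analysis in the definition of $h$ in both branches of the algorithm. The conceptual heart of the rest of the analysis is a single observation: on the artificial mixture $D_i\defeq \tfrac12\EX(P|_{\indicator(c(x)=0)},f)+\tfrac12\EX(Q|_{\mE_i},1)$ used in Line~\ref{algline:NegRelLearn}, the target $f\in C$ itself satisfies $\false^+_{D_i}(f)=0$, because the $P$-half is labeled by $f$ and the $Q$-half is labeled $1$. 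Consequently the benchmark for the positive reliable learner is $\opt_+\le \false^-_{D_i}(f)=\tfrac12\,\P{x\sim Q|_{\mE_i}}{f(x)=0}$.

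For (ii), in the trivial branch $\rej_P(h)\le \P{x\sim P}{c(x)=0}<\eps$ directly from Line~\ref{algline:ifNegligible}. In the main branch the rejected set is $R=\{c(x)=0\}\cap \bigcup_i \{c_i(x)=1\}$. I would split $R$ by $f(x)$: the piece with $f(x)=1$ lies in $\{c(x)\neq f(x)\}$ and has $P$-mass at most $\err_P(c;f)$; for the piece with $f(x)=0$, the false positive guarantee $\false^+_{D_i}(c_i)\le \eps/(2M)$ unfolds to $\tfrac12\,\P{x\sim P|_{\indicator(c(x)=0)}}{c_i(x)=1\wedge f(x)=0}\le \eps/(2M)$, so $\P{x\sim P}{c(x)=0\wedge c_i(x)=1\wedge f(x)=0}\le \eps/M$, and a union bound over the at most $M$ iterations gives total $P$-mass at most $\eps$.

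For (iii) I would case on how the while loop exits. If the loop condition on Line~\ref{algline:whilePos} fails at some $i$, then $\P{x\sim Q}{x\in \mE_i}\le \eps$ and $\{h(x)=0\}\subseteq \mE_i$ immediately yields the bound. If instead the break on Line~\ref{algline:ifPos} fires at iteration $k$, I would combine the reliable learner's near-optimal false negative guarantee with the bound on $\opt_+$ above to obtain $\P{x\sim Q|_{\mE_k}}{c_k(x)=0}\le \P{x\sim Q|_{\mE_k}}{f(x)=0}+\eps/M$, which rearranges to $\P{x\sim Q|_{\mE_k}}{f(x)=1}\le \P{x\sim Q|_{\mE_k}}{c_k(x)=1}+\eps/M<\eps/2+\eps/M\le \eps$. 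Since $\{h(x)=0\}\subseteq \mE_k$ in this case, (iii) follows. For (iv), each iteration that does not break has $\P{x\sim Q|_{\mE_i}}{c_i(x)=1}\ge \eps/2$ by Line~\ref{algline:ifPos}, so the $Q$-mass of $\mE_{i+1}$ shrinks by a factor $1-\eps/2$ per round and drops below $\eps$ within $M=(2/\eps)\log(1/\eps)$ rounds, at which point the while condition fails. A union bound over the $\le M$ reliable learner calls, each failing with probability $\delta/M$, gives overall confidence $1-\delta$, and each call takes expected polynomial time in its parameters.

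The main obstacle, and what I expect to be the only delicate step, is the false negative analysis in the break case: three ingredients---feasibility of $f$ on $D_i$, the learner's near-optimal false negative guarantee, and the Line~\ref{algline:ifPos} break condition---must be chained together just so in order to transfer a bound on $c_k$'s classification of $Q|_{\mE_k}$ into a bound on $f$ itself on $Q|_{\mE_k}$. Everything else reduces to fairly standard accounting.
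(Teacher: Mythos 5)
Your proof is correct and follows essentially the same route as the paper's: the same observation that $f$ is a feasible comparator on the artificial mixture $D_i$ (so $\opt_+\le\tfrac12\,\P{x\sim Q|_{\mE_i}}{f(x)=0}$), the same split of the rejected region by the value of $f$ to bound $\rej_P(h)$, the same two-case analysis on the loop exit for the false-negative bound, and the same geometric-decay argument for the $M$-round termination. One cosmetic note: in the break case the rejected set is $\mE_{k+1}$ rather than $\mE_k$, but your containment $\{h(x)=0\}\subseteq\mE_k$ is exactly what is needed (and mirrors what the paper implicitly uses), so nothing is lost.
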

In other words, \algneg guarantees a low false positive rate by rejecting examples. Completely analogously, one defines \algpos and argues that used with a negative reliable learner, can guarantee a low false negative rate by rejecting further examples. The two can be applied to guarantee a low total error (by Eq.~\eqref{eq:decomposition}) with bounds on the rejection rate with respect to $P$. Before we prove the above lemma, we show how to use it to prove the main theorem. The access to an exact probability computation oracle is not necessary, and estimates of the require probabilities, which can be obtained by sampling, are sufficient after minor adjustments to the constant factors, as explained in the proof of Theorem~\ref{thm:PQtoReliable}.

\begin{theorem} If a concept class $C$ is reliably learnable, then $C$ is PQ-learnable.  \label{thm:PQtoReliable}
\end{theorem}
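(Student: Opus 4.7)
The plan is to bootstrap from Lemma~\ref{lem:PQneg} (and its mirror image \algpos) together with one initial PAC-type learning step. First, note that a positive reliable learner run on the noiseless distribution $D_{P,f}$ (where $f \in C$ so $\opt_+ = 0$) produces a classifier $c$ with $\false^+(c) \le \eps/4$ and $\false^-(c) \le \eps/4$, hence $\err_P(c;f) \le \eps/2$. So I would begin by calling the positive reliable learner on a sample from $\EX(P,f)$ to obtain such a $c$, invoking the hypothesis $f \in C$ so that the reliable learner effectively acts as a PAC learner on this noise-free instance.

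Next, I would invoke \algneg with this $c$, $\EX(P,f)$, $\EX(Q)$, and the positive reliable learner for $C$, with accuracy parameter $\eps/2$ and confidence $\delta/3$. By Lemma~\ref{lem:PQneg} the returned selective classifier $h^-$ satisfies $h^-(x) \in \{c(x),\question\}$, $\rej_P(h^-) \le \eps/2 + \err_P(c;f) \le \eps$, and $\false^-_{D_{Q,f}}(h^-) \le \eps/2$. Symmetrically, I would invoke the analogous \algpos (which applies \algneg's argument to $\bar C$ and the region $c(x)=1$) using a negative reliable learner to obtain $h^+$ with $h^+(x) \in \{c(x),\question\}$, $\rej_P(h^+) \le \eps$, and $\false^+_{D_{Q,f}}(h^+) \le \eps/2$. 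The final PQ-learner outputs the selective classifier
\begin{equation*}
h(x) = \begin{cases} c(x) & \text{if } h^-(x) \neq \question \text{ and } h^+(x) \neq \question, \\ \question & \text{otherwise}. \end{cases}
\end{equation*}
Then $\rej_P(h) \le \rej_P(h^-) + \rej_P(h^+) \le 2\eps$, and since $h$ disagrees with $f$ on a non-rejected $x$ only when the relevant one of $h^\pm$ already makes that type of error, the decomposition~\eqref{eq:decomposition} gives $\err_Q(h;f) \le \false^+_{D_{Q,f}}(h^+) + \false^-_{D_{Q,f}}(h^-) \le \eps$. Rescaling $\eps$ by a constant factor yields the theorem's stated guarantee.

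The main technical obstacle is that Lemma~\ref{lem:PQneg} assumes an exact probability-computation oracle at Lines~\ref{algline:ifNegligible}, \ref{algline:whilePos}, and~\ref{algline:ifPos}, which is not available in the PQ-learning model. I would replace each such oracle call by an empirical estimate: draw $\mathrm{poly}(1/\eps, \log(1/\delta), \log M)$ fresh samples from $\EX(P,f)$ or $\EX(Q)$ and estimate the relevant probability to within additive $\eps/8$ via a Hoeffding bound, slackening the thresholds $\eps$, $\eps/2$ by this amount. Taking a union bound over the at most $M = O(\eps^{-1}\log \eps^{-1})$ iterations of each invocation and the three calls (PAC-learn $c$, \algneg, \algpos), all estimates are simultaneously accurate with probability at least $1-\delta$, which only inflates running time polynomially. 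Finally, the overall sample and time complexity is polynomial in $1/\eps$ and $1/\delta$, as required, completing the reduction from reliable to PQ-learning.
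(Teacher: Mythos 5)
Your proof follows essentially the same three-step route as the paper: use the reliable learner on the noiseless $\EX(P,f)$ to obtain a PAC-accurate $c$, invoke \algneg and its mirror \algpos with $c$ to control the $Q$-false-negatives and $Q$-false-positives respectively, combine by rejecting whenever either selective classifier rejects, and appeal to the decomposition~\eqref{eq:decomposition} together with sampling-based estimates in place of the exact-probability oracle. The only point you gloss over that the paper flags explicitly is that \algneg is only \emph{expected} polynomial time (rejection sampling from the conditional oracles), so one also needs the standard Las-Vegas-to-Monte-Carlo timeout conversion to obtain a bona fide polynomial-time PQ-learner.
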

\begin{proof}
We first find $c$ with $\err_P(c; f) \leq \eps/4$ by running a positive (or negative) reliable learner with parameters $\eps/8, \delta/4$ on ordinary labeled examples $\EX(P, f)$. With probability $\geq 1-\delta/4$ its output $c$ has both false positive and false negative rates of $\eps/8$ (since $\opt_+=0$). By the error decomposition in Eq.~\eqref{eq:decomposition}, this means it has error at most $\eps/4$. 

Next, apply Algorithm \ref{alg:PQtoReliable} (\algneg) to $c$ with parameters $\eps/4$ and $\delta/4$. By Lemma \ref{lem:PQneg}, its output $h_+$ guarantees at most $\eps/4 + \eps/4 =\eps/2$ false negative rate with respect to $Q$ and a rejection rate of  at most $\eps/4$ with respect to $P$ (with probability $\geq 1-\delta/4$). Then run the symmetric version of Algorithm \ref{alg:PQtoReliable} (\algpos) with a negative reliable learner to receive $h_-$ with false positive rate at most $\eps/2$ with respect to $Q$, and a rejection rate of at most $\eps/4$ with respect to $P$ (with probability $\geq 1-\delta/4$). Finally, reject points that are rejected by either classifier, i.e., outputting the classifier: 
\begin{align*}
    h(x) = \begin{cases} 
		h_+(x) & \text{if } h_+(x) = h_-(x)\\
		\question & \text{otherwise}.
	\end{cases}
\end{align*}
By the union bound, it is easy to see that the rejection rate with respect to $P$ would be at most $\eps$. Further, by Eq.~\eqref{eq:decomposition}, the error rate with respect to $Q$ would be at most $\eps$, as required.

Unfortunately, Algorithm \ref{alg:PQtoReliable} is a hypothetical algorithm since it requires \textit{exact} probability calculations in Lines~\ref{algline:ifNegligible}, \ref{algline:whilePos} and
\ref{algline:ifPos}. In reality, the probabilities can only be estimated to
a high accuracy with high probability. Using standard tedious arguments, one can design an algorithm without an exact probability oracle that enjoys the same guarantees of Lemma~\ref{lem:PQneg}. It would involve a change of constants, including running the positive reliable learner with parameters that are a constant factor smaller, and straightforward applications of the Chernoff-Hoeffding bound and the union bound. 

A second issue with Algorithm \ref{alg:PQtoReliable} is that it only runs in expected polynomial runtime, i.e. a ``Las Vegas'' algorithm. However, a standard timeout approach can be used to convert any Las Vegas algorithm into one that certainly runs in polynomial time (i.e. a ``Monte Carlo'' algorithm) and produces an identical result with probability $\geq 1-\delta/4$.  
\end{proof}

We now prove Lemma \ref{lem:PQneg}.
\begin{proof}[\ifarxiv Proof \fi of Lemma \ref{lem:PQneg}]
First note that the lemma holds trivially if the condition in Line~\ref{algline:ifNegligible} holds, as rejecting all negative examples will necessarily lead to a false negative rate of 0 and a rejection rate of at most $\eps$ under $P$. So henceforth consider the case in which $\Pr_{x \sim P}[c(x)=0] \geq \eps$.

For $i \geq 0$, let $q_i = \P{x \sim Q}{x \in \mE_i}$. Then, provided the condition in Line~\ref{algline:ifPos} does not cause the loop on
	Line~\ref{algline:whilePos} to break, we have that $q_i \leq q_{i-1}
	\cdot (1 - \epsilon/2)$. It will also terminate if $q_i \leq \eps$. 
	Let $j$ be the final value of $i$ when the loop terminates. Thus, \begin{align*}
	      \eps \leq (1-\eps/2)^{j} \leq e^{-j\eps/2},
	\end{align*}
	or equivalently $j \leq (2/\epsilon) \log(1/\epsilon) = M$.
	
    Therefore, with probability $\geq 1-j(\delta/M) \geq 1-\delta$, all calls to the positive reliable learners succeed. Let us assume this is the happens. 
    
    We first bound the rejection rate, then the false negative rate, and finally analyze the runtime.
    Let $D_i$ denote the distribution over $X \times \{0, 1\}$ produced by the example oracle used to learn $c_i$. Since the reliable learner succeeded, for each $i$ we have,
	\begin{align*}
	    \frac{\eps}{2M} & \geq \false^+_{D_i} (c_i) \\
	    &=\frac{1}{2}\Pr_{x \sim P}[c_i(x)=1\wedge f(x)=0\mid c(x)=0] + \frac{1}{2}\Pr_{x \sim Q}[c_i(x)=1\mid \mE_i]\\
	    &\geq \frac{1}{2}\Pr_{x \sim P}[c_i(x)=1 \wedge f(x)=c(x)=0] + 0 \\
	    \frac{\eps}{M} &\geq \Pr_{x \sim P}[c_i(x)=1 \wedge f(x)=c(x)=0]
	\end{align*}
	In the above, we have used $\Pr[A\mid B] \geq \Pr[A\wedge B]$ by Bayes rule. This implies,
	\begin{align*}
		\P{x \sim P}{f(x)= 0 \wedge h(x) = \question} &= 															  \P{x \sim P}{f(x)=c(x)=0 \wedge \exists i c_i(x) = 1}  \\
		&\leq \sum_i \P{x \sim P}{f(x) = c(x) = 0 \wedge c_i(x) = 1} \\ 
		&\leq j \frac{\eps}{M} \leq M \frac{\eps}{M} =  \eps.
	\end{align*}
	Thus, as promised for the rejection bound,
	\begin{align*}
		\P{x \sim P}{h(x) = \question} &= \P{x \sim P}{f(x)= 0 \wedge h(x) = \question} + \P{x \sim P}{f(x)= 1 \wedge h(x) = \question}\\
		&\leq \eps + \P{x \sim P}{f(x)= 1 \wedge c(x) = 0}\\
		&\leq \eps + \err_P(c; f).
	\end{align*}
	
    We next bound the false negative rate, assuming that none of the positive reliable learners failed. Note that $h(x) = 0$ iff $x \in \mE_{j}$. This implies:
    \begin{align*}
        \false^-_{D_{Q, f}}(h) = 
		\P{x \sim Q}{f(x) = 1 \wedge h(x) = 0} &= \P{x \sim Q}{f(x) = 1 \wedge x \in \mE_{j}} 
    \end{align*}
	Thus to bound the false negative rate, we must show,
    \begin{align}\label{eq:toshow}
        \P{x \sim Q}{f(x) = 1 \wedge x \in \mE_{j}} \leq \epsilon
    \end{align}
	To do this, first note that if the while loop terminates because $\P{x \sim Q}{x \in \mE_{j}} \leq \epsilon$, then Eq.~\eqref{eq:toshow} holds trivially. So, suppose the loop termination is caused by the if statement on Line~\ref{algline:ifPos}.
    
    Now, by assumption $f \in C$ and by definition of $D_{j}$, $\false^+_{D_{j}}(f)=0$ and
    \begin{align*}
		\false^-_{D_{j}}(f) = \frac{1}{2} \P{x \sim Q|_{\mE_{j}}}{f(x) = 0}
	\end{align*}

	On the other hand, we have $\false^-_{D_{j}}(c_{j}) \geq \frac{1}{2} \P{x \sim Q|_{\mE_{j}}}{c_{j}(x) = 0}$ and also by the guarantee of the learning algorithm $L$, $\false^-_{D_{j}}(c_{j}) \leq \false^-_{D_{j}}(f) + \eps/(2M)$. Combining gives,
	\begin{align*}
	    \frac{1}{2} \P{x \sim Q|_{\mE_{j}}}{c_{j}(x) = 0} \leq \false^-_{D_{j}}(c_{j}) \leq \frac{1}{2} \P{x \sim Q|_{\mE_{j}}}{f(x) = 0} + \frac{\eps}{2M}
	\end{align*}
    Rearranging and using the fact that $\Pr[f(x)=1]=1-\Pr[f(x)=0]$ gives,
    \begin{align*}
	    \P{x \sim Q|_{\mE_{j}}}{f(x) = 1} &\leq \P{x \sim Q|_{\mE_{j}}}{c_{j}(x) = 1} + 2\frac{\eps}{2M} \\
	    &\leq \frac{\eps}{2} + \frac{\eps}{M} \leq \eps,
	\end{align*}
    where we have used the stopping criterion $\P{x \sim Q|_{\mE_{j}}}{c_{j}(x) = 1}\leq \eps/2$ from Line~\ref{algline:ifPos}. Thus
    \begin{align*}
	\P{x \sim Q}{f(x) = 1 \wedge x \in \mE_{j}} 
		&\leq \P{x \sim Q}{f(x) = 1 \mid \mE_{j}} \leq  \eps
    \end{align*}
    The above is what we needed for Eq.~\eqref{eq:toshow} to bound the false negative rate. Since the total failure probability is at most $\delta$, only the runtime analysis remains. 
    
    We have assumed that we have a unit-time oracle for exact probability calculations. To simulate samples from a mixed example oracle $D_i$, one flips a coin and chooses which oracle to sample from. To sample from a conditional oracle, one simply continues drawing samples until a sample satisfies the relevant event. The \textit{expected} number of samples required is the reciprocal of the probability of the event. In this case, the expected number of samples required to get a sample from $\frac{1}{2}\EX(P|_{\indicator(c(x) = 0)}, f) + \frac{1}{2} \EX(Q|_{\mE_i}, 1)$ is at most,
    \begin{align*}
        \frac{1}{\min\left\{\P{x \sim P}{c(x)=0}, \P{x \sim Q}{x\in \mE_i} \right\}}.
    \end{align*}
    But the algorithm tests in Lines~\ref{algline:ifNegligible} and \ref{algline:whilePos} ensure that the above is $O(1/\eps)$. Since the positive reliable learner is called at most $M$ times with parameters $\poly(1/\eps, 1/\delta)$, it also runs in time $\poly(1/\eps, 1/\delta)$.
\end{proof}

\begin{theorem} If a concept class $C$ is PQ-learnable, then $C$ is reliably learnable. \label{thm:ReliabletoPQ}
\end{theorem}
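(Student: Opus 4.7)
The plan is to reduce positive reliable learning of $C$ (and of $\bar{C}$) to PQ-learning of $C$. Since reliable learning of $C$ is by definition positive reliable learnability of $C$ together with positive reliable learnability of $\bar{C}$ (equivalently, negative reliable learnability of $C$), it suffices to show that a PQ-learner for $C$ can be turned into \emph{both} a positive and a negative reliable learner for $C$. Both reductions follow the same template, so I will sketch the positive case; the negative case is entirely symmetric with the roles of $0$ and $1$ (and of $\question \to 1$ versus $\question \to 0$) swapped.

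Given the reliable-learning oracle $\EX(D)$ over $X \times \{0,1\}$ and parameters $\epsilon, \delta > 0$, the reduction will first dispatch the degenerate case $\P{(x, y) \sim D}{y = 0} < \epsilon$ by returning the constant-$1$ classifier, which has false positive rate below $\epsilon$ and zero false negatives. Otherwise let $c^* \in C$ attain (approximately) $\opt_+$, i.e.\ $\false^+_D(c^*) = 0$ and $\false^-_D(c^*) \leq \opt_+ + \epsilon/4$. Define $P$ to be $D$ conditioned on $y = 0$ (a distribution on $X$) and $Q$ to be the marginal of $D$ over $X$. The oracle $\EX(P, 0)$ is simulated by rejection sampling on $\EX(D)$---with expected $O(1/\epsilon)$ overhead---and outputting $(x, 0)$; the oracle $\EX(Q)$ is simulated by dropping labels from $\EX(D)$. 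Run the PQ-learner on $(\EX(P, 0), \EX(Q))$ with accuracy $\epsilon/4$ to obtain a selective hypothesis $h$, and output $\tilde h(x) := 1$ if $h(x) \in \{1, \question\}$ and $\tilde h(x) := 0$ otherwise.

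The key observation is that $c^*(x) = 0$ for $P$-almost-every $x$ (since $\false^+_D(c^*) = 0$), so the labeled sample distribution fed to the PQ-learner is identical to that of $\EX(P, c^*)$ with $c^* \in C$; the PQ-learner's guarantees therefore apply with $f = c^*$, giving $\err_Q(h; c^*) \leq \epsilon/4$ and $\rej_P(h) \leq \epsilon/4$. The false-positive rate of $\tilde h$ splits across $h(x) = 1$ and $h(x) = \question$: the first piece is bounded by $\err_Q(h; c^*)$ using $\{y = 0\} \subseteq \{c^* = 0\}$ almost surely under $D$, and the second piece is at most $\P{(x,y) \sim D}{y = 0} \cdot \rej_P(h) \leq \epsilon/4$. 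For the false-negative rate, $\tilde h(x) = 0$ forces $h(x) = 0$, and the decomposition
\begin{align*}
    \P{(x, y) \sim D}{h(x) = 0 \wedge y = 1} \leq \P{(x, y) \sim D}{c^*(x) = 0 \wedge y = 1} + \P{(x, y) \sim D}{h(x) = 0 \wedge c^*(x) = 1}
\end{align*}
bounds the two terms by $\false^-_D(c^*) \leq \opt_+ + \epsilon/4$ and $\err_Q(h; c^*) \leq \epsilon/4$ respectively. A constant rescaling of $\epsilon$ then yields $\false^+_D(\tilde h) \leq \epsilon$ and $\false^-_D(\tilde h) \leq \opt_+ + \epsilon$, as required for positive reliable learning.

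The only conceptually subtle point is justifying that labeling $P$-samples with the constant $0$ is ``consistent with some $f \in C$'' even though $c^*$ is unknown to the reduction: the PQ-learner sees a sample distribution identical almost surely to $\EX(P, c^*)$, so its worst-case guarantee over $f \in C$ covers this case. Everything else is routine bookkeeping: the degenerate small-$y{=}0$-probability case is absorbed by the constant-$1$ classifier, the expected-polynomial running time is converted to worst case via the timeout trick used in the proof of Theorem~\ref{thm:PQtoReliable}, failure probabilities across the component reductions are combined by a union bound against $\delta$, and the symmetric negative-reliable reduction (condition on $y = 1$, label samples $1$, convert $\question \to 0$) completes the argument by providing positive reliable learnability of $\bar{C}$.
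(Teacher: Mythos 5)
Your proof is correct and follows the same fundamental template as the paper's: use the optimal reliable classifier $c^* \in C$ (satisfying $\false^+_D(c^*)=0$) as the hidden target for the PQ-learner, simulate $\EX(P, c^*)$ by rejection-sampling the negative examples of $D$, run the PQ-learner, and convert $\question$ to the default label $1$. The one genuine difference is your choice of $Q$: you take $Q$ to be the \emph{full marginal} of $D$ over $X$, whereas the paper takes $Q = D|_{y=1}$ and then invokes Claim~4.2 of \cite{GKKM:2020} (running the PQ-learner on $Q' = \tfrac12 P + \tfrac12 Q$) to obtain the additional guarantee $\err_P(h; c^*) \leq \eps$. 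Your choice folds that trick into the definition of $Q$: because the marginal $Q$ dominates $P$ with weight $\P{y=0}$, the single bound $\err_Q(h;c^*)\leq \eps/4$ already controls the $\{h(x)=1 \wedge y=0\}$ term unconditionally, without ever needing a conditional $\err_P$ bound. This is slightly cleaner---it removes a black-box invocation, and as a bonus you only need to handle the degenerate case $\P{y=0}<\eps$ (rather than both $\P{y=0}$ and $\P{y=1}$ small, as the paper must, since the paper rejection-samples from both conditionals). Both approaches buy the same bounds $\false^+_D(\tilde h)\leq \eps$ and $\false^-_D(\tilde h) \leq \opt_+ + \eps$ after trivial constant adjustments, so this is a matter of taste rather than substance.
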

\begin{proof}
	We show that $C$ is positive reliably learnable; the proof of
	negative reliable learnability is obtained \textit{mutatis mutandis}.  To reduce PQ-learning to reliable learning, given labeled examples from an arbitrary distribution $D$ over $X \times \zo$, we must construct a noiseless distribution over labeled training examples and a distribution over unlabeled test examples. Our distribution over labeled training examples will simply be examples drawn from $D$ whose labels are negative, which is consistent with the all 0 classifier which we have assumed is in $C$. 

	Define the distributions $P, Q$ to be the marginal
	distributions over $X$ conditioned on $y=0$ and $y=1$ respectively. Formally, for any measurable set $A \subseteq X$, they are defined by
	$P(A) = D(A \times \{0 \})/D(X \times \{0 \})$ and $Q(A) = D(A \times \{1
	\})/D(X \times \{1\})$. Sampling from $P$ and $Q$ can be performed by
	sampling from $D$ and rejecting based on the value of $y$. The expected time to generate a sample from either distribution depends on the minimum class probability. To ensure this can be done efficiently, we suppose that $\P{(x, y) \sim
	D}{y = 0}>\eps/2$ and $\P{(x, y) \sim D}{y = 1}>\eps/2$. Otherwise
	a positive reliable learner is easily obtained by outputting a constant hypothesis. (A hypothesis with error at most $\eps$ must have both false positive and false negative rates of at most $\eps$, but we use $\eps/2$ since we cannot compute probabilities exactly.)
	
	Claim 4.2 in \cite{GKKM:2020} shows that if you can PQ-learn then you can additionally guarantee $\err_P(h; f) \leq \eps$. (This is done by simply running the PQ learner on distribution $Q'=\frac{1}{2}P + \frac{1}{2}Q$.) Let us therefore assume that we have a PQ-learner $L$ for $C$ that guarantees $\err_Q(h; f)$, $\err_P(h; f)$, and $\rej_P(h)$ are all at most $\eps$ with probability $\geq 1-\delta$. 
	
	Now, let $c^* \in C$ be a concept such
	that $\false^+_D(c^*) = 0$ and $\false^-_D(c^*) = \opt_+$. Note that unlike
	positive reliable learning, PQ-learning requires a target concept;
	we shall let $c^*$ be the target concept as it is consistent with all
	examples under the probability distribution $P$. Thus, simply outputting a
	negative example drawn from $D$ simulates the oracle $\EX(P, c^*)$.
	Likewise, the oracle $\EX(Q)$ is simulated by outputting the input part of a
	positively labeled example drawn from $D$. Let $h : X \rightarrow \{0, 1,
	\question \}$ be the hypothesis returned by $L(\epsilon/2, \delta, \EX(P, c^*),
	\EX(Q))$. We use $h$ to define a classifier $g : X \rightarrow \{0, 1\}$ as
	follows. Let $g(x) = 0$ if $h(x) = 0$, and $g(x) = 1$ if $h(x) \in \{\question,
	1\}$. Then we have the following: 
	\begin{align*}
		\false^+_D(g; D) &= \P{(x, y) \sim D}{g(x) = 1 \wedge y = 0}  \\
							&\leq \P{(x, y) \sim D}{g(x) = 1 \mid y = 0} \\
							&= \P{(x, y) \sim D}{h(x) = \question \mid y = 0} + \P{(x, y) \sim D}{h(x) = 1 \mid y = 0} \\
							&\leq \rej_P(h) + \err_P(h; c^*) \leq \eps/2 + \eps/2 = \eps.
	\end{align*}
    In the above we have used the rejection and $\err_P$ bound discussed above. Likewise, 
	\begin{align*}
		\false^-_D(g) &= \P{(x, y) \sim D}{g(x) = 0 \wedge y = 1} \\
							&= \P{(x, y) \sim D}{h(x) = 0 \wedge y = 1} \\
							&\leq \P{(x, y) \sim D}{c^*(x) = 0  \wedge y = 1} + \P{(x, y) \sim D}{h(x) \neq c^*(x) \wedge y = 1} \\
							&\leq \P{(x, y) \sim D}{c^*(x) = 0  \wedge y = 1} + \P{(x, y) \sim D}{h(x) \neq c^*(x) \mid y = 1} \\
							&= \opt_+ + \err_Q(h; c^*) \leq \opt_+ + \epsilon.
	\end{align*}
\end{proof}

\section{Separation Results}
\label{sec:learning_results}
\subsection{Algorithm for Learning Parities}
In this section, we observe that a very simple algorithm can PQ-learn the class
$\PAR$ of parity functions over $\{0, 1\}^d$.  For $T \subseteq \{1, 2, \ldots,
d \}$, the parity function $\oplus_T(x) := \oplus_{i \in T} x_i$ is $1$ if an
odd number of bits in $S$ are $1$. Note that the parity function corresponding
to the empty set $\emptyset$ is the constant $0$ function. If we also include
the constant function $1$, which is not expressible as a parity, in $\PAR$, the
reductions from the previous section also give positive and negative reliable learners
for the class $\PAR$.
\footnote{The inclusion of the constant $1$ function is only required for \emph{negative reliable learning}. Without it there is no guarantee that there is any concept in the class that has no false negative errors.}
We consider learnability of the family of classes $\langle \PAR_d \rangle_{d \geq 1}$, where $\PAR_d$ represents the parity functions over $\zo^d$. Although the definitions in Section~\ref{sec:prelims} omit this issue for readability, it is common in learning theory; for efficient learnability we require that the running time is bounded by a polynomial in $d$, $1/\epsilon$ and $1/\delta$. 

\begin{lemma} \label{lem:parity}
	The class $\PAR$ is PQ-learnable. 
\end{lemma}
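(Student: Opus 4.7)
My plan is to give a direct PQ-learning algorithm that exploits the $\GF(2)$-linear structure of parities, avoiding the need to invoke Theorem~\ref{thm:PQtoReliable}. In fact, the unlabeled $Q$ samples turn out to be unnecessary: the algorithm only uses labeled training data.

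\medskip\noindent\textbf{Algorithm.} Draw $m = \Theta(d/(\eps\delta))$ labeled examples $(x^{(1)},y^{(1)}),\dots,(x^{(m)},y^{(m)})$ from $\EX(P,f)$, viewing each $x^{(i)}\in\GF(2)^d$. Using Gaussian elimination, compute the $\GF(2)$-linear span $L=\vspan(x^{(1)},\dots,x^{(m)})$ together with an explicit representation. The selective classifier $h$ is defined as follows: on input $z$, test whether $z\in L$; if so, write $z=\bigoplus_{i\in S}x^{(i)}$ for some $S\subseteq[m]$ and output $\bigoplus_{i\in S}y^{(i)}$; otherwise output $\bot$.

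\medskip\noindent\textbf{Zero error on $Q$.} Since $f=\oplus_T$ is $\GF(2)$-linear, whenever $z=\bigoplus_{i\in S}x^{(i)}$ we have $f(z)=\bigoplus_{i\in S}f(x^{(i)})=\bigoplus_{i\in S}y^{(i)}=h(z)$. Thus $\err_Q(h;f)=0$, and the choice of representation of $z\in L$ is irrelevant. In particular this also shows $h(x)\in\{\bot,f(x)\}$ deterministically.

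\medskip\noindent\textbf{Rejection bound.} By construction, $\rej_P(h)=\P{x\sim P}{x\notin L}$, so it suffices to show that with probability $\ge 1-\delta$ (over the draw of the training set), $\P{x\sim P}{x\notin L}\le\eps$. Let $L_k=\vspan(x^{(1)},\dots,x^{(k)})$ and $p_k=\P{x\sim P}{x\notin L_k}$. The key identity is
\begin{align*}
\E\bigl[\dim L_{k+1}-\dim L_k \,\big|\, L_k\bigr] = \P{x\sim P}{x\notin L_k} = p_k,
\end{align*}
since a fresh sample increases the dimension by exactly $1$ iff it lies outside $L_k$. Telescoping and using $\dim L_m\le d$ gives $\sum_{k=0}^{m-1}\E[p_k]\le d$. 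The sequence $(p_k)$ is monotonically non-increasing, so $m\cdot\E[p_{m-1}]\le\sum_{k<m}\E[p_k]\le d$, hence $\E[p_{m-1}]\le d/m$. Setting $m=\lceil d/(\eps\delta)\rceil$ and applying Markov's inequality yields $\P{p_{m-1}\ge\eps}\le\delta$, as desired. The dependence on $1/\delta$ can be improved to $\log(1/\delta)$ by standard confidence boosting: run $O(\log(1/\delta))$ independent copies and pick any one whose empirical rejection rate on a held-out $P$-sample is small.

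\medskip\noindent\textbf{Main obstacle.} The only nontrivial ingredient is the dimension/telescoping argument bounding $\E[p_{m-1}]$. Everything else (Gaussian elimination, linearity of parities) is routine; no unlabeled $Q$ samples are used, and the running time is clearly polynomial in $d$, $1/\eps$, and $1/\delta$.
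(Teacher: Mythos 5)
Your proof is correct, and the hypothesis construction (predict via Gaussian elimination within the $\GF(2)$-span $L$ of the training points, reject outside $L$) is identical to the paper's. The difference is in how the rejection rate $\P{x\sim P}{x\notin L}$ is bounded. The paper appeals to VC-based uniform convergence: for $m=\poly(d,1/\eps,1/\delta)$, with high probability every parity has true $P$-error within $\eps/2$ of its empirical error on $S$; it then observes that a uniformly random parity consistent with $S$ errs on any fixed $x\notin L$ with probability exactly $1/2$, so its expected true error is at least $\frac12\P{x\sim P}{x\notin L}$, which by uniform convergence forces $\P{x\sim P}{x\notin L}\le\eps$. Your argument is more elementary and self-contained: each fresh sample increases $\dim L_k$ by one exactly when it falls outside $L_k$, so telescoping gives $\sum_{k<m}\E{p_k}\le d$, and monotonicity plus Markov yield the bound with an explicit $m=O(d/(\eps\delta))$, avoiding any appeal to a generalization theorem. (Minor: you bound $p_{m-1}$ rather than $p_m$, but $p_m\le p_{m-1}$ pointwise so this is fine.) Both proofs ignore the unlabeled $Q$ samples entirely.
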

The proof idea is that we may reject all examples not in the span of the training data---all other examples have a label that can be uniquely determined from the training data. The complete proof appears in Appendix~\ref{app:learning_results_proofs}.

\subsection{Hardness of Learning Conjunctions}
The equivalence of PQ and reliable learning makes it easy to show that PQ learning is likely harder than PAC learning. In this section, we observe that positively reliably learning the class $C$ of conjunctions is as hard as the problem of PAC learning $\DNF$ formulae, a problem that has remained open since the seminal paper of~\citet{valiant1984theory} that introduced it. Learning $\DNF$ formulae is known to be at least as hard as learning juntas on $\log d$ variables, another notoriously hard problem~\citep{Blum93relevantexamples}, and has recently shown to be hard conditional on the hardness of refuting random $k$-SAT formulae~\citep{daniely2016complexity}. As in the previous section, we consider a family of concept classes $C_d$ parametrized by size complexity $d$. 

For $X =\zo^d$ and $S \subseteq \{1,2,\ldots d\}$, the conjunction $\wedge_S$
is the function which is 1 if all the bits of $S$ are 1. The family
$\wedge_{\pm d}$ of \textit{general conjunctions} includes conjunctions that
may have literals or negations of literals. We first note that if one can
positively reliably learn $\wedge_d$ then one can positively reliably learn
$\wedge_{\pm d}$. This follows from a standard representation trick---one
simply maps each example in $\zo^d$ to an example $x'\in \zo^{2d}$ by taking
each $x \in \zo^d$ and concatenating $x$ with the bits of $x$ negated. In this
representation, any general conjunction over $x$ corresponds to a conjunction
over $x'$. Although not directly relevant, we note that without noise, the
family $\wedge_d$ of conjunctions on $d$ variables is trivially learnable from
positive examples alone (or online with at most $d$ mistakes), and thus also
negative reliably learnable. 

The class of $s$-term $\DNF$ formulae over $\{0, 1\}^d$ consists of boolean
functions that can be represented as a disjunction of at most $s$ terms, $
\varphi = T_1 \vee T_2 \vee \cdots T_s$, where each term $T_i \in \wedge_{\pm
d}$. It is easy to see that a positive reliable learner for conjunctions can be
used to \emph{weakly} learn $s$-term $\DNF$ formulae. If the labels under the
distribution are not (almost) balanced, then either the constant $0$ or $1$ function is
already a weak learner. Otherwise, each of the conjunctions $T_i$ classifies all
examples labeled negatively by $\varphi$ as negative, and at least one of the
$T_i$'s classifies $\frac{1}{s}$ fraction of the examples labeled positively by
$\varphi$ correctly. Thus, a positive reliable learner for conjunctions yields
a $\Omega(1/s)$ weak learner for $s$-term $\DNF$. A standard boosting algorithm can be
then used to convert this to obtain a PAC learning algorithm. We
remark that observations along these lines have already been made in previous
work (e.g.~\cite{Bshouty2005MaximizingAW, KKM:2012}). The above discussion can
be formalized in the form of the following lemma whose proof is omitted.

\begin{lemma}\label{lem:juntas}
If conjunctions are positively reliably learnable, then polynomial-size $\DNF$ formulae are PAC learnable. 
\end{lemma}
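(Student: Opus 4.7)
The plan is to build a weak PAC learner for $s$-term DNF formulae using the assumed positive reliable learner for (general) conjunctions, and then amplify it to a strong PAC learner via a standard boosting algorithm such as AdaBoost. Since the reliable learner for $\wedge_d$ lifts to one for $\wedge_{\pm d}$ via the $x \mapsto (x, \bar{x})$ padding trick already noted in the excerpt, I will assume without loss of generality that we have a positive reliable learner for general conjunctions $\wedge_{\pm d}$, whose terms can appear as atoms of the target DNF.

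Let $D$ be an arbitrary distribution over $\zo^d \times \zo$ whose labels are realized by an unknown $s$-term DNF $\varphi = T_1 \vee \cdots \vee T_s$, and write $q = \P{(x,y) \sim D}{y = 1}$. I split into three cases. If $q \leq \tfrac{1}{2} - \tfrac{1}{4s}$, the constant $0$ hypothesis is already weak with advantage $\geq \tfrac{1}{4s}$; symmetrically if $q \geq \tfrac{1}{2} + \tfrac{1}{4s}$. Otherwise $q$ is close to $\tfrac{1}{2}$, and I invoke the reliable learner. Because each term satisfies $T_i \Rightarrow \varphi$, one has $\false^+_D(T_i) = 0$, and by the union bound $q = \P{}{\varphi(x)=1} \leq \sum_i \P{}{T_i(x)=1}$, so some $T_{i^*}$ satisfies $\P{}{T_{i^*}(x)=1} \geq q/s$, which gives $\false^-_D(T_{i^*}) \leq q(1 - 1/s)$. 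Consequently $\opt_+ \leq q(1 - 1/s)$ for the reliable-learning problem on $D$, and running the reliable learner with accuracy $\epsilon_{\mathrm{rel}} = \Theta(1/s)$ returns an $h$ with total error at most $q(1 - 1/s) + 2\epsilon_{\mathrm{rel}}$. Plugging in $q < \tfrac{1}{2} + \tfrac{1}{4s}$ and using $q > \tfrac{1}{2} - \tfrac{1}{4s}$ to lower bound $q/s$ shows the error is at most $\tfrac{1}{2} - \Omega(1/s)$ for all $s \geq 2$ (the case $s = 1$ is trivial as the DNF is itself a conjunction). Finally, apply AdaBoost with $T = O(s^2 \log(1/\epsilon))$ rounds to obtain a hypothesis with error at most $\epsilon$; the distributions produced by boosting remain DNF-labeled, so the weak learner applies at each round, and the overall runtime is polynomial in $d$, $s$, $1/\epsilon$, and $1/\delta$.

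The main obstacle is the third case of the weak-learner analysis: when the class balance $q$ sits in the narrow band of width $\Theta(1/s)$ around $\tfrac{1}{2}$, one must verify that the $q/s$ improvement over the constant hypothesis dominates both the reliable learner's slack $2\epsilon_{\mathrm{rel}}$ and the $1/(4s)$ distance of $q$ from $\tfrac{1}{2}$. This forces $\epsilon_{\mathrm{rel}}$ to be chosen as a sufficiently small multiple of $1/s$, but once the constants are pinned down the weak advantage is $\Omega(1/s)$ and the subsequent boosting step is entirely standard.
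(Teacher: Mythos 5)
Your proof is correct and follows the same route the paper sketches: use the reliable learner for conjunctions (lifted from $\wedge_d$ to $\wedge_{\pm d}$) to build an $\Omega(1/s)$-advantage weak learner for $s$-term DNF, handling the unbalanced cases with a constant hypothesis and the balanced case by observing that $\opt_+ \le q(1-1/s)$ since some term $T_{i^*}$ covers a $1/s$ fraction of the positive mass with no false positives, then boost. The paper leaves the calculation as "easy to see"; you've filled in the arithmetic (the $q \in [1/2 - 1/(4s), 1/2 + 1/(4s)]$ band, the choice $\epsilon_{\mathrm{rel}} = \Theta(1/s)$, and the observation that boosting preserves the DNF-realizability of the reweighted distributions) correctly.
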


\subsection*{Acknowledgments}

Varun Kanade was supported in part by the Alan Turing Institute under the EPSRC
grant EP/N510129/1.

\bibliography{src/refs}
\ifarxiv
\bibliographystyle{plainnat}
\fi

\newpage
\appendix

\section{Proof from Section~\ref{sec:learning_results}}
\label{app:learning_results_proofs}
The proof of Lemma~\ref{lem:parity} gives an algorithm for PQ Learning Parities.
\begin{proof}[\ifarxiv Proof \fi of Lemma~\ref{lem:parity}]
	Let $f$ be the target parity distribution. Observe that the $\VC$ dimension of $\PAR$ is $d$ and let $m = \poly(d, 1/\epsilon, 1/\delta)$ be large enough that so that for a sample $S$ of size $m$ drawn from the oracle $\EX(P, f)$ satisfies with probability at least $1 - \delta$, that for all $c \in \PAR$, 
	\[ |\err_{\delta_S}(c; f) - \err_P(c; f)| \leq \epsilon/2, \]
	where $\delta_S$ is the empirical distribution over the sample $S$.

	Let $\hat{c} \in \PAR$ be any parity function for which
	$\err_{\delta_S}(\hat{c}; f) = 0$. Such a $\hat{c}$ must exist as $f \in \PAR$,
	and in fact can be found efficiently using Gaussian elimination. Let $S = \{(x_1,
		y_1), \ldots (x_m, y_m)\}$ denote the training dataset. Considering $\{0,
	1\}^d$ as a vector space over $\GF(2)$, let $k$ denote the dimension of the
	vector space $V := \vspan\{x_1, \ldots, x_m \}$. If $k = d$, then $\hat{c}$
	is uniquely determined and is equal to the target parity $f$. Otherwise, let
	$\{a_1, \ldots, a_{d-k} \} \subseteq \{0, 1\}^d$ be a set of linearly
	independent vectors so that $\vspan\{x_1, \ldots, x_m, a_1, \ldots, a_{d -k}
	\} = \{0, 1\}^d$. Assigning a label of either $0$ or $1$ to the ``\emph{datum}''
	$a_i$ uniquely defines a parity that is consistent with $\hat{c}$ on $S$. Thus,
	there are $2^{d -k}$ possible choices of $\hat{c}$ that are consistent
	with $S$. Let us denote this set by $\PAR_S$. We use this fact to prove that
	$\P{x \sim P}{x \not\in V} \leq \epsilon$. Clearly, if $V$ is
	$d$-dimensional, then this probability is $0$. Otherwise any $x \not\in V$
	can be written as $x^\prime + \tilde{x} + a_i$ for some $i \in \{1, \ldots,
	d - k \}$ and for $x^\prime \in V$ and $\tilde{x} \in \vspan\{a_1, \ldots,
	a_{i-1}, a_{i+1}, \ldots, a_{d - k} \}$. Now for a random parity $\tilde{c}$
	drawn from $\PAR_S$, the probability that $f(x) \neq \tilde{c}(x)$ is
	exactly $\frac{1}{2}$. To see this, observe that a random parity can be
	chosen by first assigning a random label from $\{0, 1\}$ to each $a_j$, $j
	\neq i$. Then the label of $\tilde{c}(x)$ can still be either $0$ or $1$ and
	is completely determined by the random choice made for $a_i$. This means
	that the label assigned will be different from $f(x)$ with probability
	exactly $\frac{1}{2}$. Thus, we have: 
	\begin{align*} 
		\E{\tilde{c} \sim_U \PAR_S}{\err_P(\tilde{c}; f)} \geq \frac{1}{2} \P{x \sim P}{x \not\in V}
	\end{align*} 
	On the other hand, by our choice of $m$, with probability at least $1 - \delta$,
	we have that for each $\tilde{c} \in \PAR_S$, $\err_P(\tilde{c}; f) \leq
	\epsilon/2$. Hence, it must be the case that $\P{x \sim P}{x \not\in V} \leq
	\epsilon$. 

	Let $h : \zo^d \rightarrow \{0, 1, \bot\}$ be defined as follows: if $x \in V$, then $h(x) = \hat{c}(x)$, else $h(x) = \bot$. Clearly, for any $x \in V$, $\hat{c}(x) = f(x)$, hence $\err_Q(h; f) = 0$. On the other hand, $\rej_P(h) \leq \P{x \sim P}{x \not \in V} \leq \epsilon$. 
\end{proof}

\end{document}